\documentclass[11pt]{article}

\usepackage[margin=1in]{geometry}
\usepackage{amsmath,amssymb,amsthm,mathtools}
\usepackage{microtype}
\usepackage[hidelinks]{hyperref}
\allowdisplaybreaks

\hypersetup{
  pdftitle={Second-Moment Memory in Coordinatewise Adam},
  pdfauthor={Jeonseong Kim},
  pdfsubject={Second-moment memory and positive normalized updates in Adam},
  pdfkeywords={Adam, second-moment memory, finite variance, lower bound}
}

\newtheorem{theorem}{Theorem}
\newtheorem{lemma}{Lemma}
\newtheorem{proposition}{Proposition}
\newtheorem{corollary}{Corollary}
\newtheorem{remark}{Remark}

\newcommand{\R}{\mathbb R}
\newcommand{\E}{\mathbb E}
\newcommand{\Prb}{\mathbb P}
\newcommand{\sgn}{\operatorname{sign}}
\newcommand{\pos}[1]{\left(#1\right)_+}
\newcommand{\abs}[1]{\left\lvert #1\right\rvert}
\newcommand{\eps}{\varepsilon}

\title{\textbf{Second-Moment Memory in Coordinatewise Adam}}
\author{Jeonseong Kim}
\date{}

\begin{document}
\maketitle

\begin{abstract}
Adam retains a moving average of past squared gradients in its denominator,
but the optimization cost of this memory is not well understood.  We show
that second-moment memory can itself suppress progress toward the optimum even
under finite-variance stochastic gradients.  For a simple two-point oracle,
the expected positive normalized update is \(O(M_2^{-1/2})\) after an
initialization transient, where
\(M_2=(1-\beta_2)^{-1}\) is the second-moment memory length.  We convert this
directional bound, under the stated memory and stepsize scaling, into an
average-stationarity lower bound of the same order on a smooth convex problem
with normalized gap, smoothness, and variance.  Long second-moment memory can
slow optimization even when the gradient noise has finite variance.
\end{abstract}

\section{Introduction}

Adam divides a first-moment exponential moving average by the square root of
a second-moment exponential moving average
\cite{kingma2015adam}.  Past squared gradients
continue to affect the denominator after the gradients themselves are no
longer current.  We study how this retained history affects optimization.

The finite-\(p\) model permits increasingly severe tails as \(p\) decreases
below two, but a lower-bound construction need not use the full severity of
this class.  We work at the finite-variance endpoint \(p=2\); the oracle
constructed below also satisfies the standard centered \(p\)-th moment bound
for every \(1<p\le2\).  At this endpoint the
rare-event scale is critical: an event of probability \(\rho\) and magnitude
\(\Gamma=\Theta(\rho^{-1/2})\) contributes only constant-order mass to the
second-moment update, \(\rho \Gamma^2=\Theta(1)\).  Nothing in the second moment
itself diverges.  Taking \(\rho=1-\beta_2\) also matches the event recurrence
time \(\rho^{-1}\) to the forgetting time \((1-\beta_2)^{-1}\) of Adam's
second-moment state.

A rare large gradient appears only once, but its square remains in the
second-moment accumulator \(v_t\) for many subsequent steps.  This keeps the
denominator large and suppresses ordinary updates.  Repeated rare events can
refresh the stored square before its contribution has decayed.  The slowdown
comes from past squared gradients remaining in \(v_t\), not from infinite
variance.

We quantify this effect using a finite-variance two-point
oracle.  Our first result shows that, in the low-signal regime and after an
initialization transient, the expected positive normalized update is
\(O(M_2^{-1/2})\), where \(M_2=(1-\beta_2)^{-1}\) is the second-moment memory
length.  Longer memory yields a precise finite-time upper bound on
progress toward the optimum.

Our second result converts this directional bound into an optimization lower
bound on a smooth convex one-dimensional problem with
\(\Delta=L=\sigma=1\).  For \(M_2^{3/2}\le T\) and steps of order at most
\(M_2/T\), Adam has average stationarity at least
\(\frac12M_2^{-1/2}\).  A memoryless comparison has average stationarity
\(O_\kappa(M_2^{-1})\) on the same instance.  The two updates differ only
through the accumulated second-moment state.  Only at the end do we set
\(M_2=T^{2s}\), obtaining the horizon-dependent rates \(T^{-s}\) and
\(T^{-2s}\) for fixed \(s\in(0,1/3]\).

\paragraph{Related work.}
AdamW decouples weight decay from the adaptive update
\cite{loshchilov2019adamw}.
Classical counterexamples show that Adam-type exponential normalization can
fail even on simple convex problems \cite{reddi2018convergence}.  A
complementary line establishes convergence of Adam and related adaptive
methods under different stochastic assumptions and parameter regimes
\cite{chen2019convergence,zhang2022adam,wang2023closing,jin2025framework,
li2025adamw}.  In particular, recent analyses under coordinatewise bounded
variance obtain average \(\ell_1\)-stationarity guarantees for RMSProp and AdamW
\cite{li2025rmsprop,li2025adamw}.

Several works study the internal mechanisms created by adaptive normalization.
Balles and Hennig separate the sign and magnitude components of Adam updates
\cite{balles2018dissecting}, while AdaShift and ADOPT modify the temporal
interaction between the current gradient and second-moment normalization
\cite{zhou2019adashift,taniguchi2024adopt}.  More recently, Yu et al.\ formulate
AdamW under finite-\(p\) noise as an open problem and give a corridor
lower-bound mechanism in which denominator memory can suppress progress
\cite{yu2026open}.  We bound how long stored squared gradients continue to
affect the denominator and the resulting positive normalized update.
Recent work has also
developed finite-\(p\) theory for sign-based methods and Adam
\cite{yu2026sign,pang2026adam}.

\section{Effect of second-moment memory}

\subsection{Adam and the finite-variance oracle}

Let \(x_t\in\R^d\), with \(m_0=v_0=0\).  At call \(t\), the algorithm receives
a stochastic gradient \(g_t\) and updates
\begin{align}
 m_t&=\beta_1m_{t-1}+(1-\beta_1)g_t,
 \label{eq:adam-m}\\
 v_t&=\beta_2v_{t-1}+(1-\beta_2)g_t\odot g_t,
 \label{eq:adam-v}\\
 x_{t+1}&=x_t-\eta_t\frac{m_t}{\sqrt{v_t}+\eps_t}.
 \label{eq:adam-x}
\end{align}
Square roots and quotients are coordinatewise.  Since Adam acts coordinatewise,
we henceforth analyze a single coordinate and take \(d=1\).  We use batch size
one, zero weight decay, fixed \(\beta_1,\beta_2\), predictable nonnegative
steps \(\eta_t\), and predictable nonnegative denominator offsets \(\eps_t\).

Fix \(\sigma>0\), \(0\le\beta_1<1\), and \(1/2\le\beta_2<1\).  Write
\[
 \rho=1-\beta_2,
 \qquad
 M_2=\rho^{-1},
 \qquad
 \Gamma=\sigma\sqrt{\frac{\beta_2}{\rho}}.
\]
The subscript in \(M_2\) refers to the second-moment recursion \(v_t\).  Its
geometric weights decay on the timescale \((1-\beta_2)^{-1}\), which we call
the second-moment memory length.

Let \(\xi_t\) be i.i.d.\ with
\begin{equation}
 \xi_t=
 \begin{cases}
  -\Gamma,&\text{with probability }\rho,\\[1mm]
  \dfrac{\rho \Gamma}{\beta_2},&\text{with probability }\beta_2.
 \end{cases}
 \label{eq:fv-noise}
\end{equation}
Direct calculation gives
\begin{equation}
 \E\xi_t=0,
 \qquad
 \E\xi_t^2=\sigma^2.
 \label{eq:fv-moments}
\end{equation}
At a constant population gradient \(\mu>0\), set \(g_t=\mu+\xi_t\).

\subsection{Scale calculation}

The scale of the positive update is already visible in the two atoms.  The rare
outlier writes constant-order mass into the second moment, whereas
unbiasedness forces the common positive compensation to have only
square-root scale:
\begin{equation}
 \rho \Gamma^2
 =\sigma^2\beta_2\asymp\sigma^2,
 \qquad
 \frac{\rho \Gamma}{\beta_2}
 =\sigma\sqrt{\frac{\rho}{\beta_2}}
 \asymp\sigma\sqrt\rho.
 \label{eq:atom-scales}
\end{equation}
In the low-signal regime \(\mu/\sigma\lesssim\sqrt\rho\), the numerator's
positive scale is \(\sqrt\rho\), while a remembered outlier keeps the
denominator at constant order.  Since both the outlier recurrence time and
the memory lifetime are \(\rho^{-1}=M_2\), this scaling gives
\begin{equation}
 \boxed{\textnormal{predicted positive-update scale}
 \asymp\sqrt\rho=M_2^{-1/2}.}
 \label{eq:memory-heuristic}
\end{equation}

\begin{theorem}[Positive normalized update bound]
\label{thm:positive-update}
Assume \(\Gamma\ge2\mu\).  Under the oracle \eqref{eq:fv-noise}, let
\(g_t=\mu+\xi_t\), let \(m_t,v_t\) follow
\eqref{eq:adam-m}--\eqref{eq:adam-v}, let \(\eps_t\ge0\) be any
predictable denominator offset, and define
\[
 u_t=\frac{m_t}{\sqrt{v_t}+\eps_t}.
\]
For every \(t\ge1\),
\begin{equation}
\begin{aligned}
 \E\pos{u_t}
 &\le
 (1-\beta_1^t)
 \left[
 A_{\beta_2}
 \left(
 \frac{\mu}{\sigma}+\sqrt{\frac{\rho}{\beta_2}}
 \right)
 (1-\beta_2^{t/2})
 +
 \frac{\beta_2^t}{\sqrt{1-\beta_2^t}}
 \right],\\
 A_{\beta_2}
 &=\frac{2(1+\sqrt{\beta_2})}{\sqrt{\beta_2}}.
\end{aligned}
\label{eq:positive-update}
\end{equation}
\end{theorem}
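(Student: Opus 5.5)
The plan is to bound \((u_t)_+\) pathwise by a weighted sum of positive gradient parts, and then bound each term by conditioning on the age of the most recent outlier stored in \(v_t\).

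\emph{Step 1 (reduce to single gradients).} Unrolling \eqref{eq:adam-m} with \(m_0=0\) gives \(m_t=(1-\beta_1)\sum_{s=1}^t\beta_1^{t-s}g_s\). The weights sum to \(1-\beta_1^t\). The positive part is subadditive and positively homogeneous, and \(\sqrt{v_t}+\eps_t>0\) whenever \(m_t>0\). Using \(\eps_t\ge0\), we get
\[
 (u_t)_+\le(1-\beta_1)\sum_{s=1}^t\beta_1^{t-s}\frac{(g_s)_+}{\sqrt{v_t}}.
\]
It therefore suffices to show that every \(s\le t\) satisfies
\[
 \E\bigl[(g_s)_+/\sqrt{v_t}\bigr]\le B_t,
\]
where \(B_t\) is the bracket in \eqref{eq:positive-update}. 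The prefactor \(1-\beta_1^t\) then comes out automatically.

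\emph{Step 2 (only the common atom contributes).} Since \(\Gamma\ge2\mu\), an outlier step has \(g_s=\mu-\Gamma<0\), so \((g_s)_+\) vanishes there. On a common step \(g_s=a:=\mu+\rho\Gamma/\beta_2\). This gives \(a/\sigma=\mu/\sigma+\sqrt{\rho/\beta_2}\), by \eqref{eq:atom-scales}. We also record that an outlier step has \(g_r^2=(\Gamma-\mu)^2\ge\Gamma^2/4\). Finally, every step has \(g_r^2\ge a^2\) or is an outlier, so \(g_r^2\ge\min(a^2,\Gamma^2/4)\).

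\emph{Step 3 (split on the last outlier).} Let \(R\) be the index of the most recent outlier among calls \(1,\dots,t\).

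\textbf{No outlier.} If there is none, which has probability \(\beta_2^t\), every \(g_r=a\). Then \(v_t=(1-\beta_2^t)a^2\), and the ratio is exactly \(1/\sqrt{1-\beta_2^t}\). This gives the second term of \(B_t\).

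\textbf{Outlier at age \(k\).} Otherwise, let \(R=t-k\) with \(0\le k\le t-1\). This event requires an outlier at \(t-k\) and common draws afterward, so its probability is at most \(\rho\beta_2^k\). Dropping all other terms of \(v_t\) gives
\[
 v_t\ge\rho\beta_2^k\Gamma^2/4=\sigma^2\beta_2^{k+1}/4,
\]
hence
\[
 \frac{a}{\sqrt{v_t}}\le\frac{2a}{\sigma\sqrt{\beta_2}}\,\beta_2^{-k/2}.
\]
Summing over \(k\):
\[
 \sum_{k=0}^{t-1}\rho\beta_2^k\beta_2^{-k/2}
 =\rho\,\frac{1-\beta_2^{t/2}}{1-\sqrt{\beta_2}}
 =(1+\sqrt{\beta_2})(1-\beta_2^{t/2}).
\]
Multiplying by \(2a/(\sigma\sqrt{\beta_2})\) produces exactly \(A_{\beta_2}(\mu/\sigma+\sqrt{\rho/\beta_2})(1-\beta_2^{t/2})\).

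The indicator that \(s\) is a common step is simply bounded by \(1\), with \(a\) used as the value. No independence between \(s\) and the history is needed beyond the probability bound \(\rho\beta_2^k\) for the event \(\{R=t-k\}\).

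\emph{Main obstacle.} The delicate point is the bookkeeping in Step 3. The bound must use only the single most recent outlier, discounted by \(\beta_2^{k}\), so that the factor \(\rho\) in its probability cancels against the geometric sum of \(\beta_2^{k/2}\). This cancellation is what turns \(\rho/(1-\sqrt{\beta_2})\) into the constant \(1+\sqrt{\beta_2}\). One must also check that a common step at \(s>R\) is compatible with the event, which holds since common draws are the only ones allowed there. The no-outlier case must be handled with the bias-uncorrected \(v_t\), which is where the transient term \(\beta_2^t/\sqrt{1-\beta_2^t}\) arises.
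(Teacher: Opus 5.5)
Your proof is correct and follows the paper's argument. You drop $\eps_t$, bound the numerator by $(1-\beta_1^t)$ times the common atom $g_+=a$, and partition on the age $k$ of the most recent outlier, which has probability $\rho\beta_2^k$. You keep only that outlier's contribution $\sigma^2\beta_2^{k+1}/4$ to $v_t$, sum $\rho\sum_k\beta_2^{k/2}=(1+\sqrt{\beta_2})(1-\beta_2^{t/2})$, and let the no-outlier event supply the transient $\beta_2^t/\sqrt{1-\beta_2^t}$. Your subadditivity step through $\E[(g_s)_+/\sqrt{v_t}]$ is only an expanded form of the paper's one-line bound $(m_t)_+\le(1-\beta_1^t)g_+$, and it gives the same constant.
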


The theorem turns the calculation in \eqref{eq:atom-scales} into a
finite-time upper bound by conditioning on the time since the most recent
outlier.  One such outlier suffices because \(v_t\) is a nonnegative sum of
historical squared gradients, so any surviving outlier certifies a large
denominator.

We focus on \((u_t)_+\) because only positive normalized directions move the
iterate toward the optimum; negative directions cannot help it escape the
corridor used below.

\subsection{Proof of Theorem~\ref{thm:positive-update}}

\begin{proof}[Proof of Theorem~\ref{thm:positive-update}]
Let
\[
 g_-=\mu-\Gamma<0,
 \qquad
 g_+=\mu+\frac{\rho \Gamma}{\beta_2}>0.
\]
Partition the first \(t\) draws by the latest negative outlier.  For
\(s=0,\ldots,t-1\), let \(E_s\) be the event that the most recent outlier occurs
at time \(t-s\).  Then
\begin{equation}
 \Prb(E_s)=\rho\beta_2^s.
 \label{eq:age-law}
\end{equation}
On \(E_s\), the stored square of that outlier gives
\begin{equation}
 v_t
 \ge\rho\beta_2^s(\Gamma-\mu)^2
 \ge\frac14\rho\beta_2^s\Gamma^2
 =\frac14\sigma^2\beta_2^{s+1}.
 \label{eq:outlier-v}
\end{equation}
All first-moment weights are nonnegative and sum to \(1-\beta_1^t\).  Since
every observation is at most \(g_+\),
\[
 (m_t)_+\le(1-\beta_1^t)g_+.
\]
The offset decreases the positive direction.  Hence
\[
 (u_t)_+\mathbf 1_{E_s}
 \le
 \frac{2(1-\beta_1^t)g_+}{\sigma}
 \beta_2^{-(s+1)/2}\mathbf 1_{E_s}.
\]
Taking expectations and summing over the outlier age yields
\begin{align}
 \sum_{s=0}^{t-1}\E[(u_t)_+\mathbf 1_{E_s}]
 &\le
 \frac{2(1-\beta_1^t)g_+}{\sigma\sqrt{\beta_2}}
 \rho\sum_{s=0}^{t-1}\beta_2^{s/2}\notag\\
 &\le
 (1-\beta_1^t)A_{\beta_2}
 \left(\frac\mu\sigma+\sqrt{\frac\rho{\beta_2}}\right)
 (1-\beta_2^{t/2}).
 \label{eq:age-sum}
\end{align}
The remaining event contains no outlier among the first \(t\) draws and has
probability \(\beta_2^t\).  On this event,
\[
 m_t=(1-\beta_1^t)g_+,
 \qquad
 v_t=(1-\beta_2^t)g_+^2.
\]
Its contribution is at most
\[
 \beta_2^t\frac{1-\beta_1^t}{\sqrt{1-\beta_2^t}}.
\]
Adding the two contributions proves \eqref{eq:positive-update}.
\end{proof}

\subsection{Initialization transient}

\begin{lemma}[Initialization transient bound]
\label{lem:transient}
For every \(q\in(0,1)\),
\begin{equation}
 \sum_{t=1}^{\infty}\frac{q^t}{\sqrt{1-q^t}}
 \le\frac{2}{-\log q}
 \le\frac{2}{1-q}.
 \label{eq:transient}
\end{equation}
\end{lemma}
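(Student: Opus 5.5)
The plan is to compare the series with an integral. Set \(a=-\log q>0\) and define \(f(x)=\frac{e^{-ax}}{\sqrt{1-e^{-ax}}}\) for \(x>0\), so that the \(t\)-th summand of \eqref{eq:transient} equals \(f(t)\). As \(x\) increases, the numerator \(e^{-ax}\) decreases and the denominator \(\sqrt{1-e^{-ax}}\) increases, so \(f\) is strictly decreasing on \((0,\infty)\). For a decreasing nonnegative function, each term satisfies \(f(t)\le\int_{t-1}^{t}f(x)\,dx\), hence
\[
 \sum_{t=1}^{\infty}f(t)\le\int_0^\infty f(x)\,dx .
\]
The singularity of \(f\) at \(x=0\) behaves like \((ax)^{-1/2}\) and is therefore integrable. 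This is the one point needing care: the comparison must start at \(0\) rather than \(1\), and the integral must still be finite.

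Next I evaluate the integral exactly with the substitution \(y=1-e^{-ax}\), for which \(dy=a e^{-ax}\,dx\) and \(y\) runs from \(0\) to \(1\). This gives
\[
 \int_0^\infty f(x)\,dx=\frac1a\int_0^1 y^{-1/2}\,dy=\frac{2}{a}=\frac{2}{-\log q},
\]
which is the first inequality.

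For the second inequality I use the elementary bound \(-\log q\ge 1-q\) for \(q\in(0,1)\), which follows from \(\log q\le q-1\), that is, concavity of \(\log\) (its tangent line at \(1\) lies above it). Taking reciprocals of positive quantities gives \(\frac{2}{-\log q}\le\frac{2}{1-q}\).

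I do not expect a real obstacle. The only step requiring attention is justifying the integral comparison despite the blow-up at \(0\), and this is handled by monotonicity of \(f\) together with the exact evaluation above, which shows the improper integral is finite.
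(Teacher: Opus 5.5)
Your proof is correct and takes the same approach as the paper: monotonicity and the integral test started at \(0\), an exact substitution evaluating the integral as \(2/(-\log q)\), and then \(-\log q\ge1-q\). The only cosmetic difference is that you substitute \(y=1-q^x\) where the paper uses \(y=q^x\).
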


\begin{proof}
The function \(h(x)=q^x/\sqrt{1-q^x}\) is positive and decreasing on
\((0,\infty)\).  The integral test and the substitution \(y=q^x\) give
\[
 \sum_{t=1}^{\infty}h(t)
 \le\int_0^\infty h(x)\,dx
 =\frac2{-\log q}.
\]
The inequality \(-\log q\ge1-q\) completes the proof.
\end{proof}

\begin{corollary}[Memory-length form]
\label{cor:memory-length}
Assume \(M_2\ge2\), \(\Gamma\ge2\mu\), \(\mu/\sigma\le M_2^{-1/2}\), and
\(\beta_1=0\).  Then, for every \(t\ge1\),
\begin{equation}
 \E\pos{u_t}
 \le
 12M_2^{-1/2}+
 \frac{(1-M_2^{-1})^t}{\sqrt{1-(1-M_2^{-1})^t}},
 \label{eq:memory-length}
\end{equation}
and the transient satisfies
\begin{equation}
 \sum_{t=1}^{\infty}
 \frac{(1-M_2^{-1})^t}{\sqrt{1-(1-M_2^{-1})^t}}
 \le2M_2.
 \label{eq:memory-transient-mass}
\end{equation}
The steady expected positive normalized update is \(O(M_2^{-1/2})\),
while initialization contributes at most \(2M_2\) in total.
\end{corollary}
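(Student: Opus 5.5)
Specialize Theorem~\ref{thm:positive-update} to \(\beta_1=0\), so that \(1-\beta_1^t=1\), and then bound the steady part using \(\rho=M_2^{-1}\) together with the two numerical constraints. With \(\beta_1=0\), \eqref{eq:positive-update} reads
\[
 \E\pos{u_t}\le A_{\beta_2}\left(\frac\mu\sigma+\sqrt{\frac{\rho}{\beta_2}}\right)(1-\beta_2^{t/2})+\frac{\beta_2^t}{\sqrt{1-\beta_2^t}},
\]
and the second term is already the transient in \eqref{eq:memory-length}, because \(\beta_2=1-M_2^{-1}\). It therefore suffices to show that the first term is at most \(12M_2^{-1/2}\).

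For the steady term, drop the factor \(1-\beta_2^{t/2}\le1\). The hypothesis \(M_2\ge2\) gives \(\rho\le1/2\), so \(\beta_2\ge1/2\) and \(\sqrt{\beta_2}\ge2^{-1/2}\). Since \(A_{\beta_2}=2(1+\sqrt{\beta_2})/\sqrt{\beta_2}=2+2/\sqrt{\beta_2}\) is decreasing in \(\beta_2\), we get \(A_{\beta_2}\le2+2\sqrt2\). Also \(\sqrt{\rho/\beta_2}\le\sqrt2\,M_2^{-1/2}\), and by assumption \(\mu/\sigma\le M_2^{-1/2}\). The bracket is therefore at most \((1+\sqrt2)M_2^{-1/2}\), and the whole term is at most \((2+2\sqrt2)(1+\sqrt2)M_2^{-1/2}=2(1+\sqrt2)^2M_2^{-1/2}=(6+4\sqrt2)M_2^{-1/2}\approx11.66\,M_2^{-1/2}\le12M_2^{-1/2}\). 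This gives \eqref{eq:memory-length}.

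For \eqref{eq:memory-transient-mass}, apply Lemma~\ref{lem:transient} with \(q=1-M_2^{-1}\in(0,1)\), which is valid since \(M_2\ge2>1\). Its right-hand bound gives \(\sum_t q^t/\sqrt{1-q^t}\le 2/(1-q)=2M_2\).

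The only step that needs care is the constant bookkeeping in the steady term. The value \(12\) is tight enough that one must use the monotonicity of \(A_{\beta_2}\) at \(\beta_2=1/2\) and combine the two \(\sqrt2\) factors as \(2(1+\sqrt2)^2\), rather than using cruder bounds. The hypothesis \(\Gamma\ge2\mu\) is used only to invoke the theorem.
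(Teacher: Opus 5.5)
Your proposal is correct and follows the paper's proof: you set \(\beta_1=0\) in Theorem~\ref{thm:positive-update}, bound \(A_{\beta_2}\le2(1+\sqrt2)\) and \(\sqrt{\rho/\beta_2}\le\sqrt{2\rho}\) via \(\beta_2\ge1/2\) to get the steady term at most \(2(1+\sqrt2)^2M_2^{-1/2}=(6+4\sqrt2)M_2^{-1/2}<12M_2^{-1/2}\), and obtain the transient mass from Lemma~\ref{lem:transient} with \(q=1-M_2^{-1}\). Your explicit constant just spells out the paper's ``below \(12\sqrt\rho\)'' step.
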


\begin{proof}
For \(\beta_2\ge1/2\),
\[
 A_{\beta_2}\le2(1+\sqrt2),
 \qquad
 \sqrt{\rho/\beta_2}\le\sqrt{2\rho}.
\]
Together with \(\mu/\sigma\le\sqrt\rho\), the steady coefficient in
Theorem~\ref{thm:positive-update} is below \(12\sqrt\rho\).  Equation
\eqref{eq:memory-length} follows from \(\rho=M_2^{-1}\), and
\eqref{eq:memory-transient-mass} follows from
Lemma~\ref{lem:transient}.
\end{proof}

\begin{remark}
With equal memories \(\beta_1=\beta_2=1-M_2^{-1}\), the same argument gives
\(\E(u_t)_+\le12M_2^{-1/2}+(1-M_2^{-1})^t\).  The optimization construction
uses \(\beta_1=0\) to isolate the accumulated second moment.
\end{remark}

\section{From the directional bound to optimization}

\subsection{Hard instance}

The following function is the simplest smooth embedding of the
constant-gradient setting: it preserves derivative \(\mu\) on a long interval
and then connects smoothly to a flat optimum.

For \(\mu,L,W>0\), define
\begin{equation}
 f_{\mu,L}(x)=
 \begin{cases}
  0,&x\le-\mu/L,\\[1mm]
  \dfrac L2(x+\mu/L)^2,&-\mu/L\le x\le0,\\[2mm]
  \mu x+\dfrac{\mu^2}{2L},&x\ge0.
 \end{cases}
 \label{eq:corridor}
\end{equation}
It is convex, lower bounded, and \(C^{1,1}\) with smoothness \(L\).  Start at
\(x_1=W\).  The initial gap is
\begin{equation}
 \Delta=\mu W+\frac{\mu^2}{2L}.
 \label{eq:gap}
\end{equation}
Use the following i.i.d.\ stochastic gradient oracle:
\begin{equation}
 G(x,\omega)=f_{\mu,L}'(x)+\xi(\omega)
 =\frac{\partial}{\partial x}
 \left(f_{\mu,L}(x)+\xi(\omega)x\right).
 \label{eq:corridor-oracle}
\end{equation}

\subsection{Memory-length scaling}

The directional bound suggests the signal level \(\mu=M_2^{-1/2}\).  Keeping
the initial gap equal to one then sets the initial distance \(W\) at order
\(M_2^{1/2}\), while a horizon-\(T\) step should have scale \(M_2/T\):
\begin{equation}
 \rho=M_2^{-1},\quad
 \beta_1=0,\quad
 \beta_2=1-M_2^{-1},\quad
 \sigma=L=1,\quad
 \mu=M_2^{-1/2},\quad
 W=M_2^{1/2}-\frac12M_2^{-1/2}.
 \label{eq:scaling}
\end{equation}
The leading displacement budget matches this distance scale:
\begin{equation}
 T
 \cdot
 \frac{M_2}{T}
 \cdot
 M_2^{-1/2}
 =M_2^{1/2}\asymp W,
 \qquad
 \mu W+\frac{\mu^2}{2}=1.
 \label{eq:balance}
\end{equation}
The initialization transient contributes displacement of order \(M_2^2/T\).
The condition \(M_2^{3/2}\le T\) is exactly what keeps this term at most of
order \(M_2^{1/2}\).

\begin{theorem}[Optimization lower bound for Adam]
\label{thm:optimization}
Fix \(\kappa\in(0,1/64]\).  Let \(T\in\mathbb N\) and \(M_2\ge3\) satisfy
\(M_2^{3/2}\le T\).  Run Adam on
\eqref{eq:corridor}--\eqref{eq:corridor-oracle} from \(x_1=W\) with
the parameters in \eqref{eq:scaling}.  Then \(\Delta=L=\sigma=1\).  For
every predictable denominator offset \(\eps_t\ge0\) and every predictable
learning-rate sequence satisfying
\begin{equation}
 0\le\eta_t\le\kappa\frac{M_2}{T},
 \label{eq:step-envelope}
\end{equation}
the Adam iterates obey
\begin{equation}
 \Prb(x_t\ge0\text{ for every }1\le t\le T)\ge\frac12,
 \qquad
 \frac1T\sum_{t=1}^T\E\abs{f_{\mu,1}'(x_t)}
 \ge\frac12M_2^{-1/2}.
 \label{eq:optimization-lower}
\end{equation}
\end{theorem}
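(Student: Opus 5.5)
Plan: couple the true trajectory to a "corridor process" run on the linear region and use Corollary~\ref{cor:memory-length} together with Markov's inequality.

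\textbf{Step 1: parameters.} With the parameters in \eqref{eq:scaling} we have \(\sigma=L=1\), and \eqref{eq:gap} gives \(\Delta=\mu W+\mu^2/2=1-\tfrac12M_2^{-1}+\tfrac12M_2^{-1}=1\). The hypotheses of Corollary~\ref{cor:memory-length} hold: \(\beta_1=0\), \(\mu=M_2^{-1/2}\), \(M_2\ge3\), and \(\Gamma=\sqrt{\beta_2M_2}\ge\sqrt{2}\ge2\mu\).

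\textbf{Step 2: auxiliary process.} Feed the same noise sequence and the same predictable \(\eta_t,\eps_t\) into an auxiliary Adam run with constant population gradient \(\mu\), that is \(g_t=\mu+\xi_t\). Call its iterates \(\tilde x_t\), with \(\tilde x_1=W\). As long as \(x_s\ge0\) for all \(s\le t\), the true oracle returns exactly \(\mu+\xi_s\). Hence the two runs coincide up to the first exit time from \([0,\infty)\), and the event \(\{x_t\ge0\ \forall t\le T\}\) equals \(\{\tilde x_t\ge0\ \forall t\le T\}\).

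For the predictability of \(\eta_t,\eps_t\) I would allow them to depend on the common past, which is identical on the event where the runs coincide. The first time the runs differ is after an exit has already occurred, so the event is unaffected.

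\textbf{Step 3: displacement bound.} Since \(\tilde x_1-\tilde x_{t}=\sum_{s<t}\eta_s\tilde u_s\le\sum_{s\le T}\eta_s(\tilde u_s)_+\), we get
\[
\sup_{t\le T}(W-\tilde x_t)\le D:=\sum_{s=1}^T\eta_s(\tilde u_s)_+ .
\]
Here \(\eta_s\) is bounded deterministically by \(\kappa M_2/T\), so no independence between \(\eta_s\) and \(\tilde u_s\) is needed. Using Corollary~\ref{cor:memory-length},
\[
\E D\le\kappa\frac{M_2}{T}\Big(12M_2^{-1/2}T+2M_2\Big)=\kappa\big(12M_2^{1/2}+2M_2^2/T\big)\le14\kappa M_2^{1/2},
\]
where the last step uses \(M_2^{3/2}\le T\). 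By Markov's inequality, \(\Prb(D\ge W)\le14\kappa M_2^{1/2}/W\). Since \(W\ge M_2^{1/2}-\tfrac12M_2^{-1/2}\ge\tfrac56M_2^{1/2}\) for \(M_2\ge3\), this probability is at most \(17\kappa\le17/64<\tfrac12\). On the complementary event every \(\tilde x_t>0\), which proves the first claim of \eqref{eq:optimization-lower}.

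\textbf{Step 4: stationarity.} On that event \(f'(x_t)=\mu\) for all \(t\le T\), because \(x_t\ge0\) lies in the linear region (at \(x=0\) the derivative is still \(\mu\)). Therefore
\[
\frac1T\sum_t\E\abs{f'(x_t)}\ge\mu\,\Prb(\text{event})\ge\tfrac12M_2^{-1/2}.
\]

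The main obstacle is Step 3. One must justify applying Corollary~\ref{cor:memory-length}, which is a bound on the marginal \(\E(\tilde u_s)_+\), even though \(\eta_s\) may be adaptive. This works because \(\tilde u_s\) depends only on the noise, since \(\eps_s\ge0\) only decreases \((\tilde u_s)_+\), and because \(\eta_s\) is bounded by a deterministic constant. The other delicate point is checking that the transient term \(2M_2\cdot\kappa M_2/T\) is absorbed by the condition \(M_2^{3/2}\le T\).
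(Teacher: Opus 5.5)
Your proposal is correct and follows the paper's proof essentially step for step. Your coupling with the constant-gradient reference run plus Markov's inequality is exactly the paper's crossing lemma (Lemma~\ref{lem:crossing}), and your bound \(\E D\le14\kappa M_2^{1/2}\), obtained from Corollary~\ref{cor:memory-length} and \(M_2^{3/2}\le T\), is the same computation; the only differences are cosmetic (you use \(W\ge\frac56M_2^{1/2}\) to get failure probability at most \(17\kappa<\frac12\), where the paper uses \(14\kappa M_2^{1/2}\le\frac14M_2^{1/2}\le W/2\)), and your explicit treatment of the predictable \(\eta_t,\eps_t\) in the reference run makes precise a point the paper leaves implicit.
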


\subsection{Crossing bound}

On the half-line \(x\ge0\), the objective has constant derivative
\(f_{\mu,L}'(x)=\mu\).  Remaining in this region for a constant fraction of the
horizon forces average stationarity of order \(\mu\).

\begin{lemma}[Crossing bound]
\label{lem:crossing}
Run Adam on \eqref{eq:corridor}--\eqref{eq:corridor-oracle} from
\(x_1=W\).  Let \(\widetilde u_t\) be the normalized direction of a reference
process that always receives \(\widetilde g_t=\mu+\xi_t\).  Suppose
\(\E(\widetilde u_t)_+\le B_t\) and
\(0\le\eta_t\le\bar\eta_t\), where \(\bar\eta_t\) is deterministic.  Then
\begin{equation}
 \Prb(x_t\ge0\text{ for every }1\le t\le T)
 \ge
 1-\frac1W\sum_{t=1}^{T-1}\bar\eta_tB_t.
 \label{eq:crossing}
\end{equation}
\end{lemma}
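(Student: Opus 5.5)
The plan is a coupling argument combined with Markov's inequality applied to the cumulative positive displacement. Drive the actual Adam run and the reference process with the same noise sequence \((\xi_t)\). Let \(\tau=\min\{t\ge1: x_t<0\}\), with \(\tau=\infty\) if no such time exists. The first step is to establish agreement before \(\tau\). For \(t<\tau\), every iterate \(x_1,\dots,x_t\) is nonnegative, and on \(x\ge0\) the derivative is constant, \(f_{\mu,L}'=\mu\). Hence \(g_s=\mu+\xi_s=\widetilde g_s\) for all \(s\le t\). Since \(m_t,v_t\) are deterministic functions of \(g_1,\dots,g_t\), we get \(m_t=\widetilde m_t\) and \(v_t=\widetilde v_t\). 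Using the same predictable \(\eps_t\) in the reference normalization, we conclude \(u_t=\widetilde u_t\) for \(t<\tau\).

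Next, bound the displacement. On the event \(\{\tau\le T\}\), the iterate moves from \(x_1=W\) to \(x_\tau<0\). Writing \(x_\tau=W-\sum_{t=1}^{\tau-1}\eta_t u_t\) gives
\[
 W< \sum_{t=1}^{\tau-1}\eta_t u_t\le\sum_{t=1}^{\tau-1}\eta_t (u_t)_+
 =\sum_{t=1}^{\tau-1}\eta_t(\widetilde u_t)_+
 \le\sum_{t=1}^{T-1}\bar\eta_t(\widetilde u_t)_+ .
\]
The last step uses \(0\le\eta_t\le\bar\eta_t\) and the nonnegativity of \((\widetilde u_t)_+\). Define \(S=\sum_{t=1}^{T-1}\bar\eta_t(\widetilde u_t)_+\). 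Then \(\{\tau\le T\}\subseteq\{S> W\}\), and \(S\) depends only on the reference process.

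Finally, apply Markov's inequality. Because \(\bar\eta_t\) is deterministic, \(\E S=\sum_{t=1}^{T-1}\bar\eta_t\E(\widetilde u_t)_+\le\sum_{t=1}^{T-1}\bar\eta_tB_t\). Markov then gives \(\Prb(S>W)\le \frac1W\sum_{t=1}^{T-1}\bar\eta_tB_t\), and taking complements yields \eqref{eq:crossing}.

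The step most likely to cause difficulty is the coupling and domination. The actual steps \(\eta_t\) are predictable and may depend on the path, so we cannot take \(\E[\eta_t(\widetilde u_t)_+]\) directly. We avoid this by bounding \(\eta_t\) by the deterministic envelope \(\bar\eta_t\) pathwise before taking expectations. We must also check carefully that the truncation at \(\tau-1\) uses only indices at which the two processes agree. They do: \(x_{t+1}\) depends on \(u_t\), and \(u_t\) depends on the gradients at \(x_1,\dots,x_t\), all of which are nonnegative when \(t\le\tau-1\).
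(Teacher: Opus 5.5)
Your proof is correct and follows the paper's argument: you couple the actual and reference runs through the common noise until the first negative query \(\tau\), bound the crossing displacement on \(\{\tau\le T\}\) by \(\sum_{t=1}^{T-1}\bar\eta_t(\widetilde u_t)_+\), and apply Markov's inequality. You also supply details the paper leaves implicit, namely why the two processes agree at every index \(t\le\tau-1\) and why the path-dependent \(\eta_t\) must be replaced by the deterministic \(\bar\eta_t\) before taking expectations.
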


\begin{proof}
The actual and reference processes agree until the first query below zero.
Let \(\tau=\inf\{t\ge1:x_t<0\}\).  If \(\tau\le T\), the updates before
\(\tau\) have crossed a distance larger than \(W\).  Negative normalized
directions move to the right, so
\[
 \{\tau\le T\}
 \subseteq
 \left\{
 \sum_{t=1}^{T-1}\bar\eta_t(\widetilde u_t)_+>W
 \right\}.
\]
Markov's inequality proves \eqref{eq:crossing}.
\end{proof}

\subsection{Proof of the optimization theorem}

\begin{proof}[Proof of Theorem~\ref{thm:optimization}]
Here \(\sqrt\rho=\mu=M_2^{-1/2}\).  Since \(M_2\ge3\),
\[
 \beta_2\ge\frac12,
 \qquad
 \Gamma=\sqrt{M_2-1}\ge2M_2^{-1/2}=2\mu.
\]
Corollary~\ref{cor:memory-length} gives
\begin{equation}
 \sum_{t=1}^{T-1}B_t
 \le12T M_2^{-1/2}+2M_2.
 \label{eq:B-sum}
\end{equation}
For \(\bar\eta_t=\kappa M_2/T\),
\begin{align*}
 \sum_{t=1}^{T-1}\bar\eta_tB_t
 &\le
 \kappa\frac{M_2}{T}
 \left(12T M_2^{-1/2}+2M_2\right)\\
 &=12\kappa M_2^{1/2}+2\kappa\frac{M_2^2}{T}\\
 &\le14\kappa M_2^{1/2}
 \le\frac14M_2^{1/2}
 \le\frac W2.
\end{align*}
The third line uses \(M_2^{3/2}\le T\), \(\kappa\le1/64\), and
\(W\ge\frac12M_2^{1/2}\).  Lemma~\ref{lem:crossing} gives survival
probability at least \(1/2\).  On the survival event,
\(\abs{f_{\mu,1}'(x_t)}=\mu\) for every \(t\le T\), which proves
\eqref{eq:optimization-lower}.  Finally,
\[
 \Delta
 =M_2^{-1/2}
 \left(M_2^{1/2}-\frac12M_2^{-1/2}\right)
 +\frac1{2M_2}
 =1.
\]
\end{proof}

\section{Memoryless comparison}

With \(\beta_1=0\), Adam's numerator is the current gradient.  Replacing the
accumulated denominator \(\sqrt{v_t}\) by the current magnitude \(\abs{g_t}\)
removes the accumulated second-moment state and yields exactly the raw signSGD
update \cite{bernstein2018signsgd}:
\begin{equation}
 y_{t+1}=y_t-\eta\sgn G(y_t,\omega_t),
 \qquad y_1=W.
 \label{eq:sign-update}
\end{equation}
The sign of each oracle observation is independent of the query point because
\(0\le f_{\mu,1}'(x)\le\mu\) and \(\Gamma\ge2\mu\).  The comparison
trajectory is a biased random walk.

\begin{proposition}[Memoryless comparison]
\label{prop:memoryless-comparison}
Fix \(\kappa\in(0,1/64]\).  There exists \(M_{2,0}(\kappa)\) such that, for
every instance in Theorem~\ref{thm:optimization} with
\(M_2\ge M_{2,0}(\kappa)\), setting \(\eps_t=0\) and taking the
constant step \(\eta=\kappa M_2/T\) for both Adam and
\eqref{eq:sign-update} gives
\begin{equation}
 \frac1T\sum_{t=1}^T\E\abs{f_{\mu,1}'(y_t)}
 \le\frac5\kappa M_2^{-1}.
 \label{eq:sign-rate}
\end{equation}
\end{proposition}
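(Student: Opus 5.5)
The plan is to show that the comparison trajectory \eqref{eq:sign-update} is a nearest-neighbour random walk on the lattice \(W+\eta\mathbb Z\) with a strong downward drift. We then bound the average stationarity by \(\mu\) times the expected number of visits to the region \([-\mu,\infty)\), where the derivative can be nonzero.

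\textbf{Step 1 (sign law).} For every \(y\) we have \(0\le f_{\mu,1}'(y)\le\mu\). When \(\xi=-\Gamma\), \(\Gamma\ge2\mu\) gives \(G<0\). When \(\xi=\rho\Gamma/\beta_2>0\), \(G>0\). Hence \(\sgn G(y_t,\omega_t)\) is i.i.d., independent of the past and of \(y_t\):
\begin{itemize}
\item it equals \(+1\) with probability \(\beta_2=1-\rho\), so \(y\) moves left by \(\eta\);
\item it equals \(-1\) with probability \(\rho\), so \(y\) moves right by \(\eta\).
\end{itemize}
Thus \((y_t)\) is a simple random walk with left-step probability \(p=1-\rho\) and right-step probability \(q=\rho\). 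Since \(M_2\ge3\), we have \(p-q=1-2\rho\ge1/3\).

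\textbf{Step 2 (occupation bound).} We have \(\abs{f_{\mu,1}'(y)}\le\mu\mathbf 1\{y\ge-\mu\}\). Therefore
\[
\sum_{t=1}^T\E\abs{f_{\mu,1}'(y_t)}\le\mu\sum_{z\in(W+\eta\mathbb Z)\cap[-\mu,\infty)}\E N_z ,
\]
where \(N_z\) is the total number of visits to \(z\) over all time. For a transient simple walk with drift \(p-q>0\), the expected number of visits to any site is at most the Green's function at the origin, \(1/(1-\text{return probability})=1/(p-q)=1/(1-2\rho)\). Sites above \(W\) are visited with probability at most \((q/p)^k\), and their contribution is summable. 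A cleaner route is to observe that a visit to a site above \(W\) is preceded by a visit to \(W\). Either way, the number of relevant sites is at most \((W+\mu)/\eta+1\) plus a geometric tail bounded by \(O(1/(1-2\rho))\).

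\textbf{Step 3 (arithmetic).} With \(\eta=\kappa M_2/T\), \(\mu W\le1\), \(\mu^2=M_2^{-1}\), and \(\mu\le M_2^{-1/2}\), we get
\[
\frac1T\sum_{t=1}^T\E\abs{f'(y_t)}\le\frac{\mu(W+\mu)}{\kappa M_2(1-2\rho)}+O\!\left(\frac{\mu}{T(1-2\rho)^2}\right)
\le\frac{1+M_2^{-1}}{\kappa(1-2\rho)}M_2^{-1}+O\!\left(M_2^{-2}\right).
\]
The last step uses \(T\ge M_2^{3/2}\). As \(M_2\to\infty\), the prefactor \((1+M_2^{-1})/(1-2\rho)\to1\) and the remainder is \(o(M_2^{-1}/\kappa)\). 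Choosing \(M_{2,0}(\kappa)\) large enough therefore makes the total at most \(\frac5\kappa M_2^{-1}\). The crude bound \(1-2\rho\ge1/3\) already gives roughly \(3/\kappa\) even for moderate \(M_2\), so the threshold is needed only to absorb the lower-order terms.

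\textbf{Main obstacle.} The main obstacle is Step 2: justifying the uniform Green's-function bound and handling sites above the start and the boundary lattice point near \(-\mu\). I would do this via the standard first-passage decomposition. Starting from any site, the walk returns to it with probability \(2q=2\rho<1\) when \(p>q\), so the number of visits is geometric with mean \(1/(1-2\rho)\). Adam itself plays no role here beyond fixing \(\eta\) and \(\eps_t=0\).
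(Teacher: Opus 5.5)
Your proof is correct, but it takes a different route from the paper. The paper writes $y_{n+1}=W-\eta S_n$ and concedes the trivial bound $\mu$ on the first $n_T=\lceil 4T/(\kappa M_2^{1/2})\rceil$ queries. It then uses Hoeffding's inequality to get $\Prb(y_{n+1}>-\mu)\le e^{-n/128}$ for $n\ge n_T$. This needs $M_2$ large enough that $n_T\le T$ and $M_2^{-1}\le1/4$, and its leading constant is $4/\kappa$.

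You instead bound the expected occupation time of $[-\mu,\infty)$ directly, using the Green's function of the biased nearest-neighbour walk on $W+\eta\mathbb Z$. Every site is visited at most $1/(1-2\rho)$ times in expectation, since the return probability is $2\rho$. Sites $k$ steps above $W$ are reached with probability $(\rho/(1-\rho))^k$, so together they contribute only $\rho/(1-2\rho)^2$ expected visits.

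What your route buys is a sharper leading constant, $(1+M_2^{-1})/(1-2\rho)\to1$. It also makes the threshold $M_{2,0}(\kappa)$ unnecessary. For every $M_2\ge3$, using $\mu/T\le M_2^{-2}$ and $6\kappa\le M_2$, your bound is at most $\frac4\kappa M_2^{-1}+6M_2^{-2}\le\frac5\kappa M_2^{-1}$.

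Two small corrections:
\begin{itemize}
\item At $M_2=3$ the constant is $4/\kappa$, not the ``roughly $3/\kappa$'' you state, because of the factor $1+M_2^{-1}\le4/3$. This is harmless, since $4<5$.
\item Your remark that a visit above $W$ is preceded by a visit to $W$ does not by itself control those sites. The $(q/p)^k$ hitting bound is what does the work.
\end{itemize}

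The paper's route, in exchange, uses only Hoeffding and a geometric sum rather than return and hitting probabilities of the walk. It also yields a pathwise high-probability statement that the walk stays in the flat region for all $n_T<t\le T$, which an expectation-only occupation bound does not give.
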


The scale follows directly from the crossing time.  A constant-order positive
direction reaches the flat region in \(O_\kappa(T/M_2^{1/2})\) steps, so
\begin{equation}
 \frac{T/M_2^{1/2}}{T}
 \cdot
 M_2^{-1/2}
 =M_2^{-1}.
 \label{eq:comparison-balance}
\end{equation}

\begin{proof}
Let \(X_t=\sgn G(y_t,\omega_t)\).  Then \(X_t\) are i.i.d.\ signs with
\[
 \Prb(X_t=1)=1-M_2^{-1},
 \qquad
 \E X_t=1-2M_2^{-1}.
\]
Define
\[
 S_n=\sum_{t=1}^nX_t,
 \qquad
 A_T=\frac{W+\mu}{\eta},
 \qquad
 n_T=\left\lceil\frac{4T}{\kappa M_2^{1/2}}\right\rceil.
\]
The iterate satisfies
\[
 y_{n+1}=W-\eta S_n,
 \qquad
 y_{n+1}\le-\mu
 \ \Longleftrightarrow\
 S_n\ge A_T,
\]
and
\[
 A_T
 =\frac{T}{\kappa M_2^{1/2}}
 \left(1+\frac1{2M_2}\right)
 \le\frac{3T}{2\kappa M_2^{1/2}}.
\]
Choose \(M_{2,0}(\kappa)\) large enough that \(M_2^{-1}\le1/4\) and
\(n_T\le T\).  Then \(\E S_n\ge n/2\), and for every \(n\ge n_T\),
\[
 A_T\le\frac{3n}{8}.
\]
Hoeffding's inequality gives
\[
 \Prb(S_n<A_T)\le e^{-n/128}.
\]
Let \(C_0=(1-e^{-1/128})^{-1}<129\).  Summing the geometric tail shows
\begin{equation}
 \Prb\!\left(y_t\le-\mu
 \text{ for every }n_T+1\le t\le T\right)
 \ge
 1-129\exp\!\left(-\frac{T}{32\kappa M_2^{1/2}}\right).
 \label{eq:flat-occupation}
\end{equation}
For \(n\ge n_T\),
\[
 \E\abs{f_{\mu,1}'(y_{n+1})}
 \le\mu\Prb(S_n<A_T)
 \le\mu e^{-n/128}.
\]
The first \(n_T\) query points have gradient magnitude at most \(\mu\).
Consequently,
\begin{align*}
 \frac1T\sum_{t=1}^T\E\abs{f_{\mu,1}'(y_t)}
 &\le
 \frac\mu T\left(n_T+C_0e^{-n_T/128}\right)\\
 &\le
 \frac4{\kappa M_2}
 +\frac1{T M_2^{1/2}}
 \left[1+129e^{-T/(32\kappa M_2^{1/2})}\right]\\
 &\le\frac5{\kappa M_2}.
\end{align*}
The last inequality uses \(M_2^{3/2}\le T\) and sufficiently large
\(M_2\).  The comparison trajectory spends only an
\(O_\kappa(M_2^{-1/2})\) fraction of the horizon outside the flat region.
\end{proof}

\section{Horizon-dependent rates}

The horizon exponent records only how the memory length is allowed to grow;
the bounds are \(M_2^{-1/2}\) for Adam and \(M_2^{-1}\) for the memoryless
update.

\begin{corollary}[Rates induced by growing memory]
\label{cor:horizon-rates}
Fix \(s\in(0,1/3]\) and \(\kappa\in(0,1/64]\).  For all sufficiently large
\(T\), set \(M_2=T^{2s}\) and use the instance in
Theorem~\ref{thm:optimization}.  Set \(\eps_t=0\) and use the
constant step \(\eta=\kappa M_2/T\) for both Adam and the memoryless update.
Then
\begin{equation}
 \frac1T\sum_{t=1}^T\E\abs{f_{\mu,1}'(x_t)}
 \ge\frac12T^{-s},
 \qquad
 \frac1T\sum_{t=1}^T\E\abs{f_{\mu,1}'(y_t)}
 \le\frac5\kappa T^{-2s}.
 \label{eq:horizon-rates}
\end{equation}
\end{corollary}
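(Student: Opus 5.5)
The plan is to obtain Corollary~\ref{cor:horizon-rates} by substituting \(M_2=T^{2s}\) into Theorem~\ref{thm:optimization} and Proposition~\ref{prop:memoryless-comparison}. The work lies in checking that, for all large \(T\), this choice satisfies every hypothesis of those two results.

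\textbf{Hypotheses of Theorem~\ref{thm:optimization}.} These are \(M_2\ge3\) and \(M_2^{3/2}\le T\), together with the step envelope \eqref{eq:step-envelope}.
\begin{itemize}
\item Since \(s>0\), we have \(M_2=T^{2s}\ge3\) once \(T\ge3^{1/(2s)}\).
\item Since \(s\le1/3\), we have \(M_2^{3/2}=T^{3s}\le T\).
\item The constant step \(\eta=\kappa M_2/T\) meets \eqref{eq:step-envelope} with equality.
\item The offset \(\eps_t=0\) is an admissible predictable offset.
\end{itemize}
Theorem~\ref{thm:optimization} then gives average stationarity at least \(\frac12M_2^{-1/2}=\frac12T^{-s}\), which is the first inequality in \eqref{eq:horizon-rates}. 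The instance is the one in \eqref{eq:scaling} with \(M_2=T^{2s}\), so it has \(\Delta=L=\sigma=1\) as stated there.

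\textbf{Hypothesis of Proposition~\ref{prop:memoryless-comparison}.} This is \(M_2\ge M_{2,0}(\kappa)\), in addition to the hypotheses already checked. Because \(M_2=T^{2s}\to\infty\) as \(T\to\infty\) for fixed \(s>0\), it holds for all \(T\ge M_{2,0}(\kappa)^{1/(2s)}\). The proposition then yields \(\frac5\kappa M_2^{-1}=\frac5\kappa T^{-2s}\), the second inequality in \eqref{eq:horizon-rates}.

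\textbf{Threshold on \(T\).} "Sufficiently large \(T\)" means
\[
T\ge\max\{3^{1/(2s)},\,M_{2,0}(\kappa)^{1/(2s)}\}.
\]

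\textbf{Main obstacle: non-integer \(M_2\).} The only delicate point is whether \(M_2\) must be an integer or merely real. Nothing in the earlier results uses integrality: \(\beta_2=1-M_2^{-1}\) and all the estimates are valid for real \(M_2\ge3\).

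The proof of Proposition~\ref{prop:memoryless-comparison} does use \(n_T\le T\). This follows for large \(M_2\) because
\[
n_T\approx\frac{4T}{\kappa M_2^{1/2}},
\]
which is at most \(T\) once \(M_2^{1/2}\ge 8/\kappa\). This condition is already absorbed into the choice of \(M_{2,0}(\kappa)\), so no further work is needed.
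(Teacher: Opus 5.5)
Your proposal is correct and matches the paper's argument. You substitute \(M_2=T^{2s}\), use \(s\le 1/3\) to get \(M_2^{3/2}=T^{3s}\le T\), and take \(T\) large enough that \(M_2\) clears the thresholds \(M_2\ge3\) and \(M_2\ge M_{2,0}(\kappa)\) of Theorem~\ref{thm:optimization} and Proposition~\ref{prop:memoryless-comparison}; your extra checks (step equal to the envelope, \(\eps_t=0\) admissible, real \(M_2\), \(n_T\le T\)) only spell out what the paper leaves implicit.
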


\begin{proof}
Here \(M_2^{3/2}=T^{3s}\le T\), and \(M_2\) eventually exceeds the fixed
thresholds in Theorem~\ref{thm:optimization} and
Proposition~\ref{prop:memoryless-comparison}.  Substituting
\(M_2=T^{2s}\) gives \eqref{eq:horizon-rates}.
\end{proof}

The restriction \(s\le1/3\) follows from \(M_2^{3/2}\le T\): the initialization
displacement \(M_2^2/T\) must not exceed the distance scale \(M_2^{1/2}\).

\begin{corollary}[Extension to \(1<p\le2\)]
\label{cor:finite-p}
For every \(1<p\le2\), the oracle in \eqref{eq:corridor-oracle} satisfies
the centered moment bound
\begin{equation}
 \sup_x
 \left(\E\abs{G(x,\omega)-f_{\mu,L}'(x)}^p\right)^{1/p}
 \le\sigma.
 \label{eq:finite-p-condition}
\end{equation}
\end{corollary}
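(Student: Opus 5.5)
The plan is to reduce the claim to a statement about the noise variable alone, then apply Lyapunov's inequality. By \eqref{eq:corridor-oracle}, for every query point \(x\) we have \(G(x,\omega)-f_{\mu,L}'(x)=\xi(\omega)\). This difference does not depend on \(x\), so the supremum in \eqref{eq:finite-p-condition} is simply \((\E\abs{\xi}^p)^{1/p}\). It therefore suffices to show \((\E\abs{\xi}^p)^{1/p}\le\sigma\) for the two-point law \eqref{eq:fv-noise}.

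Next I invoke \eqref{eq:fv-moments}, which gives \(\E\xi^2=\sigma^2\). For \(1<p\le2\), the function \(z\mapsto z^{p/2}\) is concave on \([0,\infty)\). Jensen's inequality applied to \(z=\xi^2\) then gives
\[
 \E\abs{\xi}^p=\E(\xi^2)^{p/2}\le(\E\xi^2)^{p/2}=\sigma^p.
\]
Taking \(p\)-th roots yields the bound, with equality at \(p=2\). Equivalently, this is the monotonicity of \(L^p\) norms on a probability space. As a sanity check, one could instead write out \(\E\abs{\xi}^p=\rho\Gamma^p+\beta_2(\rho\Gamma/\beta_2)^p\) explicitly, but the Jensen route avoids any computation.

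No step is a real obstacle. The only point to state explicitly is that the centering in \eqref{eq:finite-p-condition} is exactly \(\xi\), uniformly over \(x\), including the flat and quadratic pieces of \(f_{\mu,L}\). This holds because the oracle adds the same additive noise everywhere, so the same bound covers both the corridor instance and the constant-gradient setting of Theorem~\ref{thm:positive-update}.
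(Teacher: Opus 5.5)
Your proposal is correct and matches the paper's proof: both observe that the centered noise is \(\xi\), independent of \(x\), and then combine \(\E\xi^2=\sigma^2\) with Lyapunov's inequality. Your Jensen step with the concave map \(z\mapsto z^{p/2}\) is just that inequality proved inline.
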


\appendix

\section{Bias correction}
\label{app:bias}

For fixed \(\beta_1,\beta_2\) and any predictable nonnegative denominator
offset \(\bar\eps_t\), define the bias-corrected Adam direction
\cite{kingma2015adam}:
\[
 \widehat u_t=
 \frac{m_t/(1-\beta_1^t)}
 {\sqrt{v_t/(1-\beta_2^t)}+\bar\eps_t}.
\]
Then
\[
 \widehat u_t=
 \frac{\sqrt{1-\beta_2^t}}{1-\beta_1^t}
 \frac{m_t}
 {\sqrt{v_t}+\bar\eps_t\sqrt{1-\beta_2^t}}.
\]
The same conditioning argument gives
\begin{equation}
 \E(\widehat u_t)_+
 \le
 A_{\beta_2}
 \left(\frac\mu\sigma+\sqrt{\frac\rho{\beta_2}}\right)
 +\beta_2^t.
 \label{eq:bias-corrected}
\end{equation}
Division by \(1-\beta_1^t\) cancels the total first-moment weight.  Division
of \(v_t\) by \(1-\beta_2^t\) contributes a factor at most one to the sum over
outlier ages.  On the no-outlier event, the corrected direction is at most
one.  Standard bias correction retains the \(O(M_2^{-1/2})\) bound in
the low-signal regime \(\mu/\sigma\le\sqrt\rho\).

\section{Finite-\texorpdfstring{\(p\)}{p} moment bound}
\label{app:finite-p}

\begin{proof}[Proof of Corollary~\ref{cor:finite-p}]
Since \(G(x,\omega)-f_{\mu,L}'(x)=\xi(\omega)\), Lyapunov's inequality and
\eqref{eq:fv-moments} give, for every \(1<p\le2\),
\[
 \sup_x\left(\E\abs{G(x,\omega)-f_{\mu,L}'(x)}^p\right)^{1/p}
 =\left(\E\abs{\xi}^p\right)^{1/p}
 \le
 \left(\E\xi^2\right)^{1/2}
 =\sigma.
\]
The oracle satisfies the moment assumption throughout the full
range \(1<p\le2\).
\end{proof}

\end{document}